\theoremstyle{plain}
\newtheorem{theorem}{Theorem}[section]
\theoremstyle{definition}
\theoremstyle{remark}
\newcommand{\smurf}{\texttt{SMURF-THP}\xspace}
\newcommand{\stitle}[1]{\vspace{1.5ex}\noindent{\bf #1}}
\newcommand{\etitlet}[1]{\vspace{0.0ex}\noindent{\underline{\em #1}}}
\mathchardef\mhyphen="2D
\begin{document}

\title{\smurf: Score Matching-based UnceRtainty quantiFication for Transformer Hawkes Process\thanks{Published as a conference paper in ICML 2023.}}

\date{Oct 24, 2023}

\author{Zichong Li\thanks{Li, Xu, Zuo, Zhang and Zhao are affiliated with Georgia Institute of Technology. Jiang is affiliated with Amazon. Zha is affiliated with The Chinese University of Hong Kong. Correspondence to \url{zli911@gatech.edu}, \url{tourzhao@gatech.edu} and \url{zhahy@cuhk.edu.cn}.}, Yanbo Xu, Simiao Zuo, Haoming Jiang, Chao Zhang, Tuo Zhao, Hongyuan Zha}

\renewcommand{\shorttitle}{\smurf: Score Matching-based UnceRtainty quantiFication for Transformer Hawkes Process}

\maketitle

\begin{abstract}
Transformer Hawkes process models have shown to be successful in modeling event sequence data. However, most of the existing training methods rely on maximizing the likelihood of event sequences, which involves calculating some intractable integral. Moreover, the existing methods fail to provide uncertainty quantification for model predictions, e.g., confidence intervals for the predicted event's arrival time. 
To address these issues, we propose \smurf, a score-based method for learning Transformer Hawkes process and quantifying prediction uncertainty. Specifically, \smurf learns the score function of events' arrival time based on a score-matching objective that avoids the intractable computation. With such a learned score function, we can sample arrival time of events from the predictive distribution. This naturally allows for the quantification of uncertainty by computing confidence intervals over the generated samples. We conduct extensive experiments in both event type prediction and uncertainty quantification of arrival time. In all the experiments, \smurf outperforms existing likelihood-based methods in confidence calibration while exhibiting comparable prediction accuracy.
\end{abstract}

\section{Introduction}
Sequences of discrete events in continuous time are routinely generated in many domains such as social media \citep{yang2011like}, financial transactions \citep{bacry2015hawkes}, and personalized treatments \citep{wang2018supervised}. A common question to ask is: Given the observations of the past events,
% which type and what time would be the occurrence of the next event?
when and what type of event will happen next?
Temporal point process such as the Hawkes process \citep{hawkes1971spectra} uses an \textit{intensity function} to model the arrival time of events.
A conventional Hawkes process defines a parametric form of the intensity function, which assumes that the influence of past events on the current event decreases over time. Such a simplified assumption limits the model's expressivity and fails to capture complex dynamics in modern intricate datasets. Recently, neural Hawkes process models parameterize the intensity function using recurrent neural networks \citep{rmtpp2016, nhp2017} and Transformer \citep{thp2020,sahp2020}. These models have shown to be successful in capturing complicated event dependencies. 

However, there are two major drawbacks with existing training methods for neural Hawkes process models.
First, they mainly focus on maximizing the log-likelihood of the events' arrival time and types.
To calculate such likelihood, we need to compute the integral of the intensity function over time, which is usually computationally challenging, and therefore numerical approximations are often applied. As a result, not only is the computation complex, but the estimation of the intensity function can still be inaccurate in these likelihood-based methods.
Second, the arrival time of events often follows a long tail distribution, such that point estimates of the arrival time are unreliable and insufficient. Thus, confidence intervals that can quantify model uncertainty are necessary. However, calculating the confidence interval directly from density is computationally challenging. This is because we need to compute the coverage probability, which is essentially another integral of the intensity function (recall that the intensity is an infinitesimal probability).

To overcome the above two challenges, we propose \smurf, a score-based method for learning Transformer Hawkes process models and quantifying the uncertainty of the models' predicted arrival time.
The score-based objective in \smurf, originally introduced by \citet{hyvarinen2005estimation}, is to match the derivative of the log-likelihood (known as \textit{score}) of the observed events' arrival time to the derivative of the log-density of the empirical (unknown) distribution. Note that calculating scores no longer requires computing the intractable integral in the likelihood function. Moreover, \smurf is essentially a score-based generative model: With the neural score function learnt by \smurf, we can generate samples from the predicted distribution of the arrival time. This naturally allows for the quantification of uncertainty in predictions by computing confidence intervals over the generated samples.

Note that Transformer Hawkes process jointly models event types and arrival time. 
Because score matching methods can only be applied to handle the continuous arrival time but not the categorical event types, 
we need to decompose the joint likelihood into a marginal likelihood of the arrival time and a partial likelihood of the types. When training the Transformer Hawkes process models using \smurf, the objective becomes a weighted sum of the score-matching loss (which replaces the marginal likelihood of the arrival time) and the negative partial log-likelihood of event types. 

We demonstrate the effectiveness of \smurf for learning Transformer Hawkes process models on four real-world datasets, each of which contains sequences of multiple types of discrete events in continuous time. We decompose the joint likelihood by conditioning event type on the arrival time and consider a Transformer Hawkes process where one single intensity function is shared across all event types. Our experimental results show that compared with likelihood-based methods, \smurf demonstrates superior performance on uncertainty quantification of arrival time, while exhibiting comparable event-type prediction accuracy. For example, \smurf achieves within 0.3\% prediction accuracy compared with likelihood-based methods, while gaining 4-10\% confidence calibration in predictions of arrival times.

The rest of the paper is organized as follows: Section 2 briefly reviews the background, Section 3 describes our proposed method, Section 4 presents the experimental results, and Section 5 discusses the related work and draws a brief conclusion.
\section{Background}
We briefly review the neural Hawkes process models, the Transformer model and the score matching method.
\subsection{Neural Hawkes Process}
For simplicity, we review neural Hawkes process with only one event type. And we will introduce Hawkes process with multiple event types (i.e., marked point process) in the next section. 
Let $\mathcal{S}=\{t_i\}_{i=1}^{L}$ denote an event sequence with length $L$, where each event is observed at time $t_i \in [0,t_{\max}]$. 
We further denote the history up to time $t$ as $\mathcal{H}_{t} = \{t_j:t_j<t\}$, and the conditional intensity function as $\lambda(t\mid \mathcal{H}_t)$.
Then, we have the conditional probability of the next event proceeding $t_{i-1}$ as:
\begin{align}
p_T(t\mid \mathcal{H}_{t_i}) = \lambda(t \mid \mathcal{H}_{t_i}) e^{-\int_{t_{i-1}}^t \lambda(\tau\mid \mathcal{H}_{t_i}) d \tau} .
\label{density}
\end{align}
The log-likelihood of the event sequence $\mathcal{S}$ is then
\begin{align}
    \ell_{0}(\mathcal{S}) = \sum_{i=1}^L \log \lambda(t_i | \mathcal{H}_{t_i}) - \int_0^{t_{\max}} \lambda(\tau | \mathcal{H}_{t_i}) d \tau.
\label{loglikelihood}
\end{align}
Notice that exact computation of the integral in Eq.~\eqref{density} is intractable, such that numerical approximations are required.

\textit{Neural Hawkes Process} parameterizes the intensity function $\lambda$ with deep neural networks.
For example, \citet{rmtpp2016, nhp2017} use variants of recurrent neural networks, and \citet{thp2020, sahp2020} use the Transformer model \citep{attention}. Other examples include employments of deep Fourier kernel \citep{SAPP} and feed-forward neural networks \citep{FNNint}.

% Neural Point Process: intensity; intensity-free
\subsection{Transformer Model}

In this paper, we parameterize the intensity function $\lambda$ in Eq.~\eqref{density} using a Transformer model \citep{attention}. The Transformer model has multiple Transformer layers, where each layer contains a self-attention mechanism and a feed-forward neural network. 
The self-attention mechanism assigns attention weights to every pair of events in a sequence. These weights signify the strength of the dependency between pairs of events, with smaller weights indicating weaker dependencies and larger weights suggesting stronger ones.
Such mechanism efficiently models the dependencies between events irrespective of their position in the sequence, thereby capturing long-term effects.
The feed-forward neural network in each layer further incorporates non-linearities to offer a larger model capacity for learning complex patterns.
Studies have shown that the Transformer model outperforms recurrent neural networks in modeling event sequences \citep{thp2020, sahp2020}.

\subsection{Score Matching}
\textit{Score matching} \citep{score_matching2005} is originally designed for estimating non-normalized statistical methods on $\mathbb{R}^n$ without challenging computation of the normalization term.
In practice, however, score matching is not scalable to high-dimensional data and deep neural networks due to the calculation of derivative of density.
Denoising score matching \citep{denoise2011} is one of its variants that circumvent the derivative by adding perturbation to data points.
\citet{slice-sm} also enhance the scalability by projecting the score function onto random vectors.
\citet{gsm2019} and \citet{gsm2022} further generalize the origin score matching to accommodate densities supported on a more general class of domains.
Early work \citep{sahani2016score} has adopted score matching in modeling Poisson point process, but it fails to model complicated event dependencies induced in the modern event data due to the simplified assumption that the intensity function is independent of the historical events.
A suitable sampling algorithm for score matching-based models is \textit{Langevin Dynamics (LD)}, which can produce samples from a probability density using only its score function.
\citet{mirror} propose Mirror Langevin Dynamics (MLD) as a variant of Langevin Dynamics that focuses on sampling from a constrained domain.
We employs both LD and MLD for generating event samples in our experiments.

\section{Method}
\label{sec:method}
\subsection{Score Matching Objective of Hawkes Process}
Let $\mathcal{S}=\{(t_i,k_i)\}_{i=1}^{L}$ denote an event sequence of length $L$, where each pair $(t_i,k_i)$ corresponds to an event of type $k_i \in \{1, \cdots, M\}$ happened at time $t_i \in [0,t_{\mathrm{max}}]$.
Also, we denote the history events up to time $t$ as $\mathcal{H}_{t} = \{(t_j,k_j):t_j<t\}$.
Our goal is to learn $p_{T,K}(t,k \given \mathcal{H}_{t})$, the joint conditional probability of the event proceeding time $t$ given the history $\mathcal{H}_t$.

To employ score matching, we decompose the joint conditional pdf $p_{T,K}$ by conditioning on the event time.
By doing so, the partial likelihood of the discrete event types can be maximized by minimizing the cross entropy loss; and the marginal likelihood of the continuous event time can be substituted by a score-matching objective. Such a substitution avoids the intractable integral of the intensity.
Specifically, we condition $p_{T,K}$ on the event time and we have:
\begin{align}
p_{{T,K}}(t,k \given \mathcal{H}_{t}) = p_T(t\given \mathcal{H}_{t}) \cdot p_{{K\given T}}(k\given t,\mathcal{H}_{t}).
\end{align}
Correspondingly, we have the log-likelihood:
\begin{align}
%\resizebox{\hsize}{!}{$
    \ell(\mathcal{S})\!=\!
    \sum_{i=1}^L \log p_T(t_i | \mathcal{H}_{t_i})
    \!+\! \sum_{i=1}^L \log p_{{K | T}}(k_i | t_i,\mathcal{H}_{t_i}).
    \label{ll-1}
\end{align}
% To compute $p_T$ (the same as in Eq.~\eqref{density}), 
We can use a neural network to parameterize the intensity function and directly train the model using Eq.~\eqref{ll-1}. However, such an approach inevitably faces computational challenges, i.e., exact computation of the intensity's integral is intractable. Therefore, we derive a score-matching objective to substitute the first term in Eq.~\eqref{ll-1}.

In \smurf, we use a Transformer model with parameters $\theta$ to parameterize the intensity function.
More details are presented in Section \ref{sec:parametrize}.
A sample's \textit{score} is defined as the gradient of its log-density.
Then, using Eq.~\eqref{density}, we can write the score of the $i$-th event given its history $\mathcal{H}_{t_i}$ and the model parameters $\theta$ as:
\begin{align}
{\psi}(t_i\given\mathcal{H}_{t_{i}};\theta) =\partial_{t} \log{p_{T}(t_i\given\mathcal{H}_{t_{i}};\theta)} \nonumber =\partial_{t} \log{\lambda(t_i\given\mathcal{H}_{t_{i}};\theta)} - \lambda(t_i\given\mathcal{H}_{t_{i}};\theta).
% \label{eq:score_single}
\end{align}
The original objective of score matching is to minimize the expected squared distance between the score of the model $\psi(\cdot;\theta)$ and the score of the ground truth $\psi^*(\cdot)$. However, minimizing such an objective is infeasible since it relies on the unknown score $\psi^*(\cdot)$. We can resolve this issue by following the general derivation in \citet{score_matching2005} and arrive at an empirical score-matching objective for Hawkes process with single type:
\begin{align}
\hat{J}(\theta)   = \sum_{i=1}^{L}[\frac{1}{2}{\psi}(t_{i}\given\mathcal{H}_{t_{{i}}};\theta)^2  + \partial_{t}{\psi}(t_{i}\given\mathcal{H}_{t_{{i}}};\theta)],
\label{eq:obj_ours1}
\end{align}
where $\partial_{t} \psi(t_i \given \mathcal{H}_{t_i};\theta) = \partial^2_{t} \log{\lambda(t_i\given \mathcal{H}_{t_{i}};\theta)} - \partial_{t} \lambda(t_i\given \allowbreak \mathcal{H}_{t_{i}};\theta)$.
We state in the follow theorem that the score matching objective in Eq. (\ref{eq:obj_ours1}) satisfies local consistency: minimizing $\hat{J}(\theta)$ is as sufficient as maximizing the first term of Eq. (\ref{ll-1}) for estimating the model parameters. 
 
% \yx{Need to polish the Theorem \& proof}
\begin{theorem}
\label{theorem1}
Assume the event time in sequence $\mathcal{S}$ follows the model: $p_T^*(t\given \mathcal{H}_t)=p_T(t\given \mathcal{H}_{t};\theta^*)$ for some $\theta^*$, and that no other parameter gives a pdf that is equal \footnote{Equality of pdf's are taken in the sense of equal almost everywhere with respect to the Lebesgue measure.} to $p_T(\cdot;\theta^*)$.
Assume further that the optimization algorithm is able to find the global minimum and $p_T(t\given \mathcal{H}_{t};\theta)$ is positive for all $t$ and $\theta$.
Then the score matching estimator obtained by minimizing Eq. (\ref{eq:obj_ours1}) is consistent, i.e., it converges in probability towards $\theta^*$ when sample size approaches infinity.
\end{theorem}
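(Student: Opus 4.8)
The plan is to follow the consistency argument of \citet{score_matching2005}, adapted to the history-conditioned, single-type Hawkes setting. First I would introduce the population-level score-matching objective
\begin{align}
J(\theta) = \tfrac{1}{2}\, \mathbb{E}\big[(\psi(t \given \mathcal{H}_t;\theta) - \psi^*(t \given \mathcal{H}_t))^2\big],
\end{align}
where $\psi^*(t \given \mathcal{H}_t) = \partial_t \log p_T^*(t \given \mathcal{H}_t)$ is the true score and the expectation is over the data-generating process. The integration-by-parts derivation already invoked for Eq.~\eqref{eq:obj_ours1} shows that $\tfrac{1}{L}\hat{J}(\theta)$ is, up to an additive constant independent of $\theta$, an empirical average whose population counterpart is $J(\theta)$. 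So the strategy is to identify $\theta^*$ as the unique minimizer of $J$ and then transfer this to the empirical minimizer.

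Next I would establish that $\theta^*$ is the unique global minimizer of $J$. Since $J(\theta)\ge 0$ with equality exactly when $\psi(\cdot\given\mathcal{H}_t;\theta)=\psi^*(\cdot\given\mathcal{H}_t)$ almost everywhere, it suffices to show that matching scores forces matching densities. Under the positivity assumption, $\log p_T(t\given\mathcal{H}_t;\theta)$ is well defined, and equality of the $t$-derivatives of the two log-densities on the connected time domain implies that the two log-densities differ only by a constant in $t$; because both integrate to one, this constant is zero, giving $p_T(\cdot\given\mathcal{H}_t;\theta)=p_T^*(\cdot\given\mathcal{H}_t)=p_T(\cdot\given\mathcal{H}_t;\theta^*)$ almost everywhere. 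The identifiability hypothesis (no other parameter yields the same pdf) then forces $\theta=\theta^*$, so $J(\theta)=0 \iff \theta=\theta^*$.

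Finally I would pass from the population objective to the estimator. By a law of large numbers, $\tfrac{1}{L}\hat{J}(\theta)$ converges in probability to $J(\theta)$ plus the $\theta$-independent constant, pointwise in $\theta$. Strengthening this to uniform convergence over the parameter set and invoking a standard M-estimation (argmin-continuity) argument, the global minimizer of $\tfrac{1}{L}\hat{J}$ converges in probability to the unique minimizer $\theta^*$ of $J$; the assumption that the optimizer attains the global minimum then closes the argument.

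The main obstacle is the step from pointwise convergence to the convergence of minimizers in the non-i.i.d. Hawkes setting. Unlike the i.i.d. case treated by \citet{score_matching2005}, the summands in $\hat{J}(\theta)$ are conditioned on overlapping histories and are therefore dependent, so the law of large numbers and the requisite uniform convergence must be justified either through a stationarity/ergodicity assumption on the event stream or through observing many i.i.d. sequences with $L$ counting sequences. Controlling the boundary terms in the integration-by-parts identity (so that $J$ and $\hat{J}$ genuinely differ only by a $\theta$-independent constant) and securing the compactness/regularity needed for the argmin transfer are the technical points requiring the most care; by contrast, the algebraic identification in the preceding step is comparatively routine.
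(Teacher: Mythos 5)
Your proposal is correct and follows essentially the same route as the paper's proof: identify the population objective $J(\theta)$, show that $J(\theta)=0$ forces score equality, hence (via positivity and normalization) density equality, hence $\theta=\theta^*$ by identifiability, and then invoke a law of large numbers to transfer the argmin from $J$ to $\hat{J}$. Your closing remarks on uniform convergence and the dependence of the summands flag genuine technical gaps that the paper's own proof passes over silently, but they do not change the argument's structure.
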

\begin{proof}
Let $\psi^{*}(\cdot)$ and $\psi(\cdot;\theta)$ be the associated score function of $p_T^*(\cdot)$ and $p_T(\cdot;\theta)$, respectively.
The objective in Eq. (\ref{eq:obj_ours1}) is an empirical estimator of the following objective:
\begin{align}
    J(\theta)\!=\!\frac{1}{2}{\left\langle \sum_{i=1}^{
    L}  ( \psi(t_i \mid \mathcal{H}_{t_i};\theta) 
    - \psi^{*}(t_i \mid \mathcal{H}_{t_i}) )^2  \right\rangle}_{\{t_i\}_{i=1}^L \sim p_T^*}.
\end{align}
% where the expectation is of sequence $\{t_i\}_{i=1}^L$ over the true distribution $p_\mathcal{S}^*$.
We first prove that $J(\theta)=0 \Rightarrow \theta=\theta^*$.
Since $p_T^*(\cdot)$ is positive, we can infer from $J(\theta')=0$ that $\psi(\cdot;\theta')$ and $\psi^{*}(\cdot)$ are equal, which implies $\log p_T^*(\cdot)=\log p_T(\cdot;\theta ')+c$ for some constant $c$.
Because both $p_T^*$ and $p_T$ are pdf's, the constant $c$ must be 0 and hence we have $p_T^*=p_T$.
% We can infer that $c$ must be 0 because both $p_T^*$ and $p_T$ are pdf's, which indicates $p_T^*=p_T$.
By assumption, $\theta^*$ is the only parameter that fulfills this equation, so necessarily $\theta'=\theta^*$.

Then according to the law of large numbers, $\hat{J}$ converges to $J$ as the sample size approaches infinity.
Thus, the estimator converges to a point where $J$ is globally minimized.
Considering $J(\theta)=0 \Rightarrow \theta=\theta^*$, the minimum is unique and must be found at the true parameter $\theta^*$.
\end{proof}

We highlight that by substituting the first term of Eq.~\eqref{ll-1} with the score-matching objective in Eq.~\eqref{eq:obj_ours1}, we no longer need to compute the intractable integral.
For the second term in Eq.~\eqref{ll-1}, we employ a neural network classifier $g(t_i,\mathcal{H}_{t_i};\omega)$ parameterized by $\omega$. The classifier takes the timestamp $t_i$ and its history $\mathcal{H}_{t_i}$ as inputs, and the classifier outputs an event type distribution of the next event proceeding time $t_i$.
We can learn the model parameters $[\theta; \omega]$ by minimizing the following loss function
\begin{align}
\mathcal{L}(\mathcal{S}) = \alpha \hat{J}(\theta) + \sum_{i=1}^L \mathrm{CE}(g(t_i,\mathcal{H}_{t_i};\omega),k_i),
\end{align}
where $\alpha$ is a hyperparameter that controls the weight of the score matching objective and $\mathrm{CE}$ is the cross entropy loss.

\subsection{Training: Denoising Score Matching}
In practice, the direct score-based approach has limited success because of two reasons. First, the estimated score functions are inaccurate in low-density regions \citep{SMLD}. Second, the derivatives of the score functions in Eq. (\ref{eq:obj_ours1}) contain second-order derivatives, leading to numerical instability.

To alleviate these issues, we adopt denoising score matching \citep{denoise2011}, which perturbs the data with a pre-specified noise distribution.
Specifically, we add a Gaussian noise $\mathcal{N}(0,\sigma)$ to each observed event time $t_i$, obtaining a noise-perturbed distribution $q_{\sigma}(t\given t_i)$, the intent of which is to augment samples in the low density regions of the original data distribution.
Moreover, the denoising score matching objective circumvents computing second-order derivatives by replacing them with the scores of the noise distributions.
Suppose for the $i$-th event, we have $S$ perturbed event time denoted by $\{t_i^{\sigma,j}\}_{j=1}^S$, then we can substitute the objective $\hat{J}(\theta)$ in Eq. (\ref{eq:obj_ours1}) by
\begin{align}
\hat{J}^{\sigma}(\theta) \!=\! \sum_{i=1}^{L}\sum_{j=1}^{S} \frac{1}{2} \Vert {\psi}(t_{i}^{\sigma,j}\given\mathcal{H}_{t_{i}};\theta)^2  - \partial_{t}\log q_{\sigma}(t_{i}^{\sigma,j}\given t_i) \Vert ^2.
\end{align}
Here, $\partial_{t}\log q_{\sigma}(t\given t_i)=\frac{t_i-t}{\sigma^2}$ is the score of the noise distribution, which directs the perturbed timestamps to move towards the original time $t_i$.

\subsection{Parametrization}
\label{sec:parametrize}
We now introduce how we construct our model. We follow Transformer Hawkes Process \citep{thp2020, sahp2020} that leverages self-attention mechanism to extract features from sequence data.
Given input sequence $\mathcal{S}$, each event are firstly encoded by the summation of temporal encoding and event type embedding \citep{thp2020}.
Then we pass the encoded sequence $\mathbf{C}\in \mathbb{R}^{L\times D}$ through the self-attention module that compute the multi-head attention output $\mathbf{A}$ as 
\begin{align*}
&\mathbf{A}=\mathrm{Concat}(\mathbf{A}_1,...,\mathbf{A}_H)\mathbf{W}^{O},\\
&\mathbf{A}_h={\mathrm{Softmax}}(\frac{(\mathbf{CW}_h^{Q})^{\top}\mathbf{CW}_h^{K}}{\sqrt{d_k}})\mathbf{CW}_h^{V}.
    % &\mathbf{Q}=\mathbf{CW}^{Q},\quad \mathbf{K}=\mathbf{CW}^{K}, \quad \mathbf{V}=\mathbf{CW}^{V}.
\end{align*}
Matrices $\mathbf{W}^{Q}_h,\mathbf{W}^{K}_h \in \mathbb{R}^{d\times d_k}$ and $\mathbf{W}^{V}_h \in \mathbb{R}^{d\times d_v}$ are query, key, value projections, and
$\mathbf{W}^{O}\in\mathbb{R}^{(d_v*H)\times d}$ aggregates the final attention outputs.
Then $\mathbf{A}$ is fed through a position-wise feed-forward neural network (FFN) to obtain the hidden representations as
\begin{align*}
\mathrm{FFN}(\mathbf{A}) &= \max(0,\mathbf{AW}_1^{\mathrm{FC}}+\mathbf{b_1})\mathbf{W}_2^{\mathrm{FC}} +\mathbf{b_2},
% \mathbf{h}(j) &= \mathrm{FFN}(\mathbf{A})_j.
\end{align*}
 where the $j^{th}$ row of $\mathrm{FFN}(\mathbf{A})$ encodes the $j^{th}$ event and all past events up to time $t_j$.
In practice, we stack multiple self-attention modules and position-wise feed-forward networks to construct a model with a larger capacity.
We add future masks while computing the attention to avoid encoding future events.

After generating hidden representations for event sequence, we parametrize the total intensity function $\lambda$ as $f$ and the conditional distribution of next event type $p_{K\given T}$ as $g$.
The parametric functions are defined by
\begin{align*}
f(t,\mathcal{H}_{t_{j+1}}) & = \mathrm{Tanh}(\mathbf{h}(j)[ \mathbf{W}_1^f (t-t_j)+\mathbf{W}_2^f ]+\mathbf{b}_1^f){\mathbf{w}_3^f}^\top \quad + \mathbf{b}_2^f,\\
g(t, \mathcal{H}_{t_{j+1}}) & = {\mathrm{Softmax}}(\mathbf{h}(j)\left [ \mathbf{W}_1^g (t-t_j)+ \mathbf{W}_2^g \right ]+\mathbf{b}^g),
\end{align*}
where $\mathbf{W}_1^f,\mathbf{W}_2^f,\mathbf{w}_3^f,\mathbf{W}_1^g,\mathbf{W}_2^g$ are trainable weight matrix.

\subsection{Uncertainty Quantification}
\label{sampling}
Using the learnt score function, we can generate new events using the Langevin Dynamics (LD) and compute confidence intervals.
Without loss of generality, we denote the learnt score function as $\psi(t \given \mathcal{H}_t)=\partial_t \log p(t \given \mathcal{H}_t)$. Suppose we aim to generate a sample for the $i^{th}$ event, we first generate an initial time gap $t^{(0)}\sim \pi(t)$ with $\pi$ being a uniform prior distribution.
Then, the LD method recursively updates the time gap given history $\mathcal{H}_{t_i}$ by
\begin{align}
    t^{(n)} = t^{(n-1)} + \frac{\epsilon}{2}\psi(t_{i-1}+t^{(n-1)} \given \mathcal{H}_{t_{i}}) +\sqrt{\epsilon} z_n.
\end{align}
Here, $\epsilon$ is a step size and we have $z_n \sim \mathcal{N}(0,1)$ for $n=1, \cdots, N$.
The distribution of $t^{(N)}$ is proved to approximate the predicted distribution as $\epsilon \to 0$ and $N \to \infty$ under certain regularity conditions \citep{langevin_cov}.
Note that negative time gap can appear in both the perturbation process and the sampling process due to the added random Gaussian noise. This violates the constraint that an event's arrival time should be non-decreasing.
To resolve this issue, we adopt Tweedie's formula \citep{tweedie} and add an extra denoising step after the LD procedure:
\begin{align}
\hat{t} = t^{(N)}+\sigma \psi(t_{i-1}+t^{(N)} \given \mathcal{H}_{t_{i}}).
\end{align}
Here, $\sigma$ is the standard deviation of the Gaussian noises. The final event time sample is then $t_{i-1}+\hat{t}$.

Using the above sampling algorithm, we can generate event time samples under different histories for uncertainty quantification. As for event type, we sample from the learnt conditional density function $g(t_{i-1}+\hat{t}, \mathcal{H}_{t_i}; \hat{\omega})$ for each time sample.

\section{Experiments}
% \zc{Datasets could be deferred to the appendix.}
\setlength\abovedisplayskip{8pt}
\setlength\belowdisplayskip{8pt}
We demonstrate the effectiveness of our method on four real-world datasets with four baselines. Our code is publicly available at \href{https://github.com/zichongli5/SMURF-THP}{https://github.com/zichongli5/SMURF-THP}.

\subsection{Setup}
\stitle{Datasets.} We experiment on four real-world datasets. Table~\ref{dataset} summarizes their statistics.
\begin{table}[thb!]
\centering
\caption{Datasets statistics. From left to right columns: name of the dataset, number of event types, number of events, and average length per sequence.}
\vskip 0.15in
\label{dataset}
\begin{tabular}{cccc}
\toprule
Dataset       & \#Type & \#Event  & Avg. Length \\ \midrule
StackOverflow & 22    & 480413  & 64             \\
Retweet       & 3     & 2173533 & 109            \\
MIMIC-II      & 75    & 2419    & 4              \\
Financial     & 2     & 414800  & 2074           \\ \bottomrule
\end{tabular}
% \vspace{-0.3in}
\end{table}
%We evaluate the following four datasets for evaluating our methods.
% Statistics of the datasets are included in Appendix \ref{sec:dataset}

\etitlet{StackOverflow \citep{so_data}}
StackOverflow is a website that serves as a platform for users to ask and answer questions.
Users will be awarded badges based on their proposed questions and their answers to others.
This dataset contains sequences of $6,633$ users' reward history in a two-year period, where each event signifies receipt of a particular type of medal.

\etitlet{Retweet \citep{retweet_data}}
This dataset contains $24,000$ sequences of tweets.
Each sequence starts with an original tweet at time 0, and the following events signify retweeting by other users.
All users are grouped into three categories based on the number of their followers.
So each event is labeled with the retweet time and the retweeter's group.

\etitlet{MIMIC-II \citep{rmtpp2016}}
This dataset contains a subset of $650$ patient's clinical visits to the Intensive Care Units in a seven-year period.
Each sequence consist of visits from one particular patient, and each event is labeled with a timestamp and a set of diagnosis codes.

\etitlet{Financial Transactions \citep{rmtpp2016}}
This dataset contains a total of $0.7$ million transaction records for a stock from the New York Stock Exchange. The long single sequence of transactions are partitioned into $2,000$ subsequences, where each event is labeled with the transaction time and the action that was taken: buy or sell.

For all the aforementioned datasets, we adhere to the same data pre-processing and train-dev-test partitioning as described in \citet{rmtpp2016} and \citet{nhp2017}. Additionally, we apply normalization and log-normalization techniques to enhance performance. In particular, we normalize the event time in the MIMIC-II dataset, while for the Retweet and Financial datasets, we perform log-normalization of time using the formula $\frac{\log(t)-\mathrm{Mean}(\log(t))}{\mathrm{Var}(log(t))}$. During the testing phase, we rescale the generated events' timestamps to their original scale and assess their quality.

\stitle{Baselines.} We compare our method to four existing works:

\noindent $\bullet$ Neural Hawkes Process (NHP, \citet{nhp2017}), which designs a continuous-time LSTM to model the evolution of the intensity function between events; 

\noindent $\bullet$ Noise-Contrastive Estimation for Temporal Point Process (NCE-TPP, \citet{ncetpp2020}), which adopts the noise-contrastive estimator to bypass the computation of the intractable normalization constant;

\noindent $\bullet$ Self-Attentive Hawkes Process (SAHP, \citet{sahp2020}), which proposes a time-shifted positional encoding and adopts self-attention to model the intensity function;

\noindent $\bullet$ Transformer Hawkes Process (THP, \citet{thp2020}), which adopts the Transformer architecture to capture long-term dependencies in history and designs a continuous formulation for the intensity function.

For a fair comparison, we employ the same backbone architecture as THP and use default hyperparameters for NHP and SAHP. We implement NCE-TPP using the Transformer backbone in place of its original LSTM structure.
% While only ERTPP naturally supports flexible sampling,
Because these methods do not support uncertainty quantification originally, we calculate the scores from their estimated intensity functions that were learnt through maximum likelihood or noise-contrastive estimation. Then we can follow the similar LD sampling procedure of ours to generate samples using these baseline models.

\stitle{Metrics.} We apply five metrics for measuring the quality of the generated event samples:

\noindent $\bullet$ \textit{Calibration Score} calculates the Root Mean Squared Error (RMSE) between the coverage of the confidence intervals produced by the samples and that of the desired probabilities. 
Specifically, with $U$ generated samples $\{{\hat{t}_i}^{~j}\}_{j=1}^U$ for each time point $t_i$, we can estimate a $q_s$-confidence interval as $[0,t_i^{q_s}]$, where $t_i^{q_s}$ is the $q_s$-quantile of the $U$ samples.
Then we can compute the coverage of the estimated $q_s$-intervals by counting how many times the true value $t_i$ falls in the estimated intervals, i.e., $c_s = \sum_{i=1}^L \mathbb{I}_{t_i<t_i^{q_s}}$.
Calibration Score is defined as the average RMSE between the coverage and the confidence level $q_s$.
We consider to compute CS scores at quantiles of $\{0.5,0.55,...,0.95\}$ because the samples generated below $0.5$ quantiles are less useful and can be still noisy even using the denoising sampling method;

\noindent $\bullet$ \textit{Continuous Ranked Probability Score (CRPS)} measures the compatibility of the estimated cumulative distribution function (cdf) $\hat{F}$ per time $t_i$ as 
\begin{align*}
\mathrm{CRPS}(\hat{F},t_i) = \int_{0}^\infty(\hat{F}(t) - \mathbb{I}_{t\geq t_i})^2 dt.
\end{align*}
We compute the empirical CRPS \citep{crps} for time $t_i$ from the samples by
\begin{align*}
\mathrm{CRPS}=\frac{1}{LU}\sum_{i=1}^L[\sum_{j=1}^U|\hat{t}_i^{~j}-t_i|-\frac{1}{2U}\sum_{j=1}^U\sum_{k=1}^U|\hat{t}_i^{~j}-\hat{t}_i^{~k}|].
\end{align*}
The total CRPS is then defined as the average among the empirical CRPS of all events.

\noindent $\bullet$ \textit{Interval Length (IL)} computes the average length of the predicted intervals for a particular confidence level $q_s$ as 
\setlength{\abovedisplayskip}{6pt}
\setlength{\belowdisplayskip}{6pt}
\begin{align*}
\mathrm{IL}=\frac{1}{L}\sum_{i=1}^L t_i^{q_s}.
\end{align*}

\noindent $\bullet$ \textit{Coverage Error (CER)} calculates the mean absolute error between a coverage $c_s$ and the corresponding confidence level $q_s$ as
\begin{align*}
\mathrm{CER} = |c_s-q_s|.
\end{align*}

We report IL and CER at $q_s=0.5$ in the main results. For all of the above metrics, lower is considered ``better''.

% More settings could be found in the Appendix~\ref{appendix}.

\noindent $\bullet$ \textit{Type Prediction Accuracy} measures the quality of the generated event's types. We take the mode of the sampled types as the prediction and then compute the accuracy.

\subsection{Results}
\begin{table*}[tb]
\centering
\caption{Comparison of different methods' performance on four real-world datasets in terms of Calibration Score (CS), CRPS, Coverage Error (CER) and Interval Length (IL). CER and IL are both calculated at confidence level $0.5$.}
\vskip 0.15in
% \vspace{1ex}
\label{main_results}
\resizebox{\textwidth}{!}{%
\begin{tabular}{lcccccccccc}
\toprule
                                  & \multicolumn{5}{c}{StackOverflow}                                                                                                                                 & \multicolumn{5}{c}{Retweet}                                                                                                                                       \\ \cmidrule(r){2-6} \cmidrule(r){7-11} 
Methods                           & $\mathrm{CS}(\%)(\downarrow)$  & $\mathrm{CRPS}(\downarrow)$    & $\mathrm{CER}(\%)(\downarrow)$  & $\mathrm{IL}(\downarrow)$   & $\mathrm{Acc}(\%)(\uparrow)$ & $\mathrm{CS}(\%)(\downarrow)$  & $\mathrm{CRPS}(\downarrow)$   & $\mathrm{CER}(\%)(\downarrow)$  & $\mathrm{IL}(\downarrow)$    & $\mathrm{Acc}(\%)(\uparrow)$ \\ \midrule
NHP                   & $6.64\scriptstyle{\pm 0.55}$ & $0.69\scriptstyle{\pm 0.01}$ & $5.78\scriptstyle{\pm 0.26}$ & $0.67\scriptstyle{\pm 0.04}$   & $46.20\scriptstyle{\pm 0.13}$   & $14.35\scriptstyle{\pm 0.22}$ & $1.65\scriptstyle{\pm 0.05}$ & $16.98\scriptstyle{\pm 0.24}$ & $0.077\scriptstyle{\pm 0.005}$    & $60.28\scriptstyle{\pm 0.08}$                      \\
NCE-TPP            & $5.64\scriptstyle{\pm 0.46}$ & $0.62\scriptstyle{\pm 0.01}$ & $5.99\scriptstyle{\pm 0.39}$ & $0.63\scriptstyle{\pm 0.04}$   & $46.20\scriptstyle{\pm 0.15}$   & $13.55\scriptstyle{\pm 0.10}$ & $1.59\scriptstyle{\pm 0.05}$ & $10.30\scriptstyle{\pm 0.16}$ & $0.055\scriptstyle{\pm 0.002}$    & $60.30\scriptstyle{\pm 0.06}$                      \\
SAHP                   & $4.48\scriptstyle{\pm 0.24}$ & $0.48\scriptstyle{\pm 0.02}$ & $7.73\scriptstyle{\pm 0.47}$ & $0.55\scriptstyle{\pm 0.01}$   & $46.22\scriptstyle{\pm 0.06}$  & $10.06\scriptstyle{\pm 0.35}$ & $1.12\scriptstyle{\pm 0.02}$ & $15.65\scriptstyle{\pm 0.03}$ & $0.061\scriptstyle{\pm 0.001}$    & $60.32\scriptstyle{\pm 0.08}$                      \\
THP                    & $4.13\scriptstyle{\pm 0.25}$ & $0.46\scriptstyle{\pm 0.01}$ & $5.90\scriptstyle{\pm 0.33}$ & $0.56\scriptstyle{\pm 0.02}$  & $\mathbf{46.48}\scriptstyle{\pm 0.05}$& $4.12\scriptstyle{\pm 0.13}$ & $1.08\scriptstyle{\pm 0.05}$ & $3.00\scriptstyle{\pm 0.02}$ & $0.059\scriptstyle{\pm 0.002}$    & $\mathbf{60.63}\scriptstyle{\pm 0.13}$                      \\
$\smurf$ & $\mathbf{0.65}\scriptstyle{\pm 0.12}$& $\mathbf{0.44}\scriptstyle{\pm 0.01}$& $\mathbf{0.48}\scriptstyle{\pm 0.09}$& $\mathbf{0.52}\scriptstyle{\pm 0.01}$& $46.26\scriptstyle{\pm 0.08}$        & $\mathbf{0.71}\scriptstyle{\pm 0.16}$  & $\mathbf{0.86}\scriptstyle{\pm 0.04}$ & $\mathbf{0.76}\scriptstyle{\pm 0.03}$  & $\mathbf{0.031}\scriptstyle{\pm 0.001}$ & 
$60.34\scriptstyle{\pm 0.12}$  \\
% $\smurf_{2}$ & $0.98\scriptstyle{\pm 0.02}$ & $0.45\scriptstyle{\pm 0.01}$ & $1.18\scriptstyle{\pm 0.01}$  & $0.57\scriptstyle{\pm 0.01}$ & $46.44$                      & $9.83\scriptstyle{\pm 0.18}$  & $0.98\scriptstyle{\pm 0.03}$ & $13.42\scriptstyle{\pm 0.05}$ & $0.18\scriptstyle{\pm 0.01}$  & $\mathbf{60.67}$           \\
\midrule

                                  & \multicolumn{5}{c}{MIMIC-II}                                                                                                                                   & \multicolumn{5}{c}{Financial}                                                                                                                                   \\ \cmidrule(r){2-6} \cmidrule(r){7-11}  
Methods                           & $\mathrm{CS}(\%)(\downarrow)$  & $\mathrm{CRPS}(\downarrow)$   & $\mathrm{CER}(\%)(\downarrow)$  & $\mathrm{IL}(\downarrow)$ & $\mathrm{Acc}(\%)(\uparrow)$ & $\mathrm{CS}(\%)(\downarrow)$  & $\mathrm{CRPS}(\downarrow)$   & $\mathrm{CER}(\%)(\downarrow)$  & $\mathrm{IL}(\downarrow)$  & $\mathrm{Acc}(\%)(\uparrow)$ \\ \midrule
NHP                    & $9.87\scriptstyle{\pm 0.35}$ & $0.74\scriptstyle{\pm 0.03}$ & $8.85\scriptstyle{\pm 0.29}$ & $0.69\scriptstyle{\pm 0.04}$ &    $83.25\scriptstyle{\pm 0.27}$ & $4.64\scriptstyle{\pm 0.42}$ & $1.03\scriptstyle{\pm 0.01}$ & $3.85\scriptstyle{\pm 0.30}$ & $0.053\scriptstyle{\pm 0.002}$  & $60.23\scriptstyle{\pm 0.05}$                      \\
NCE-TPP                    & $6.52\scriptstyle{\pm 0.24}$ & $0.65\scriptstyle{\pm 0.01}$ & $9.23\scriptstyle{\pm 0.38}$ & $0.77\scriptstyle{\pm 0.04}$   & $83.60\scriptstyle{\pm 0.32}$       & $3.55\scriptstyle{\pm 0.12}$ & $1.35\scriptstyle{\pm 0.05}$ & $4.09\scriptstyle{\pm 0.04}$ & $0.055\scriptstyle{\pm 0.04}$    & $60.28\scriptstyle{\pm 0.09}$                      \\
SAHP                   & $7.49\scriptstyle{\pm 0.32}$ & $0.61\scriptstyle{\pm 0.01}$ & $15.70\scriptstyle{\pm 0.52}$ & $\mathbf{0.35}\scriptstyle{\pm 0.01}$ & 
$83.76\scriptstyle{\pm 0.13}$        & $3.50\scriptstyle{\pm 0.33}$ & $1.09\scriptstyle{\pm 0.05}$ & $3.19\scriptstyle{\pm 0.06}$ & $0.048\scriptstyle{\pm 0.003}$  & $60.37\scriptstyle{\pm 0.08}$                      \\
THP                    & $3.89\scriptstyle{\pm 0.14}$ & $0.75\scriptstyle{\pm 0.05}$ & $5.81\scriptstyle{\pm 0.12}$ & $0.84\scriptstyle{\pm 0.03}$ & 
$\mathbf{84.78}\scriptstyle{\pm 0.13}$ & $3.53\scriptstyle{\pm 0.19}$ & $1.57\scriptstyle{\pm 0.02}$ & $3.19\scriptstyle{\pm 0.13}$ & $0.042\scriptstyle{\pm 0.006}$  & $60.51\scriptstyle{\pm 0.06}$                      \\
$\smurf$ & $\mathbf{2.87}\scriptstyle{\pm 0.11}$  & $\mathbf{0.55}\scriptstyle{\pm 0.04}$ & $\mathbf{2.33}\scriptstyle{\pm 0.09}$  & $0.43\scriptstyle{\pm 0.03}$ & $84.02\scriptstyle{\pm 0.31}$  & $\mathbf{2.49}\scriptstyle{\pm 0.13}$  & $\mathbf{0.84}\scriptstyle{\pm 0.01}$ & $\mathbf{2.61}\scriptstyle{\pm 0.06}$  & $\mathbf{0.042}\scriptstyle{\pm 0.003}$ & $\mathbf{61.02}\scriptstyle{\pm 0.09}$  \\  
\bottomrule
\end{tabular}%
}
\end{table*}

\textbf{Comparison of Uncertainty Quantification.}
We compare different methods' performance in Table~\ref{main_results}.
We can see that $\smurf$ outperforms other baselines by large margins in terms of CS and CER.
It also achieves the lowest CRPS and IL at confidence level of $0.5$ for most datasets.
This indicates that $\smurf$ provides the most precise and concurrently, the narrowest confidence intervals.
The improvements come from two reasons: 1) \smurf can generate samples in higher quality using the scores that are more accurately learned by minimizing the score matching objective. In contrast, the baselines do not initially support sampling and their scores are less accurate as they are derived from the intensity functions that are estimated by maximizing an approximated log-likelihood.
2) \smurf can utilize the denoising score matching method to perturb the data points and effectively reduce the area of low-density regions in the original data space, whereas other baselines do not support this.

\textbf{Comparison of Event Type Prediction.}
We also compare the event type prediction accuracy of all the methods in Table~\ref{main_results}.
NHP, NCE-TPP and SAHP predict event type by the estimated expectation of intensity, while THP and our method adopt an additional classifier for prediction.
As depicted in the table, $\smurf$ attains slightly lower accuracy compared to THP, but surpasses other baselines.
We attribute the accuracy loss to the fact that $\smurf$ is trained to predict event type given a ground truth timestamp but is tested on a sampled timestamp since it is unknown in the testing.
As elaborated in Section \ref{gt-acc-sec}, $\smurf$ can achieve comparable or even superior accuracy when tested using true timestamps.

\textbf{Different Confidence Level.}
Figure~\ref{coverage} and Figure~\ref{intlen} compare the coverage and the interval length of the confidence intervals at different levels across four datasets.
The black line in Figure~\ref{coverage} signifies the desired coverage.
Compared with THP and SAHP, our method is much closer to the black line, which indicates that the generated confidence intervals are more accurate.
Notice that the coverage error on small probabilities is larger than that on large probabilities.
The reason is that sampling around 0 is more challenging as the Langevin step size is relatively larger and the score function changes more rapidly.
$\smurf$ effectively mitigates such bias by learning a more precise event distribution.
Figure~\ref{intlen} reveals that \smurf also achieves a shorter interval length than THP and SAHP for most of the assessed confidence levels.
% As the probability grows, the interval length sometimes goes unstable due to the long tail distribution of event time.
Despite $\smurf$ experiencing longer intervals on small probability for the StackOverflow dataset, the corresponding coverage of the baselines is less precise.
The improvement on the Retweet dataset is particularly notable since its distribution of event time exhibits a longer tail, where adding perturbation is necessary for modeling scores on low-density regions.
\begin{figure}[thb!]
\centering
\subfigure[StackOverflow]{   
    \includegraphics[width=0.24\linewidth]{./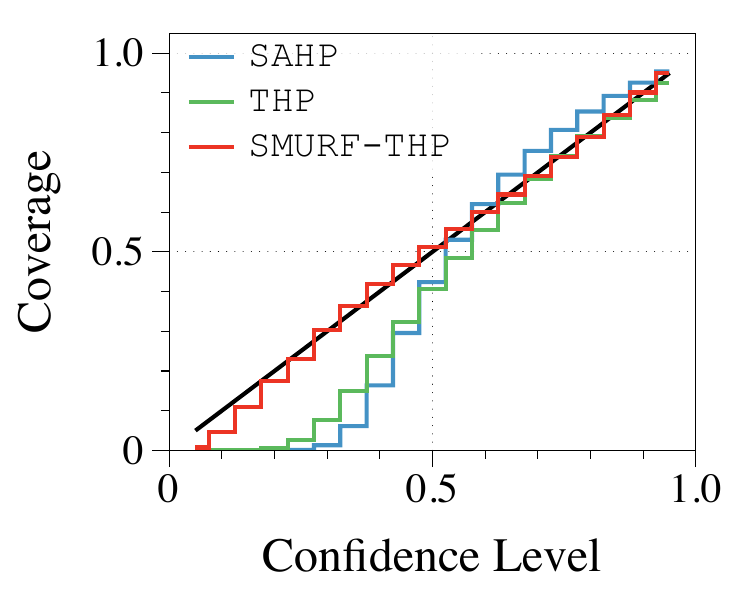}%
}%
\subfigure[Retweet]{   
    \includegraphics[width=0.24\linewidth]{./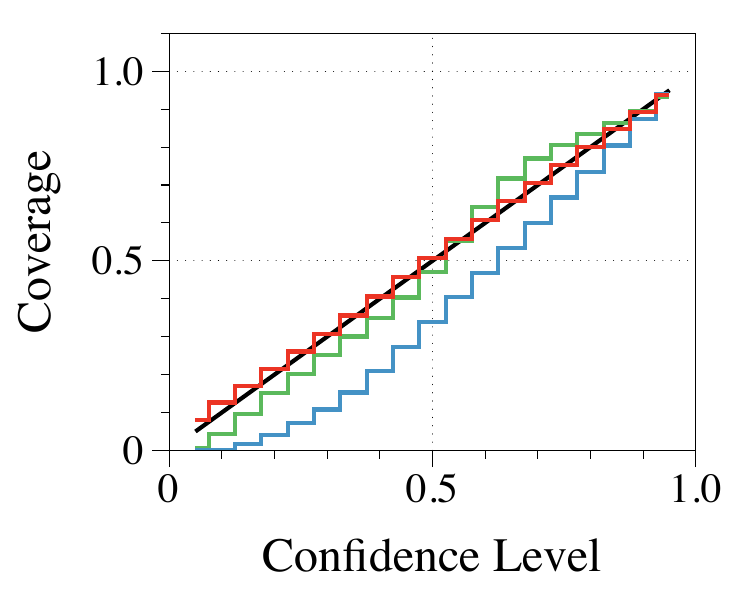}%
}%
\subfigure[MIMIC-II]{   
    \includegraphics[width=0.24\linewidth]{./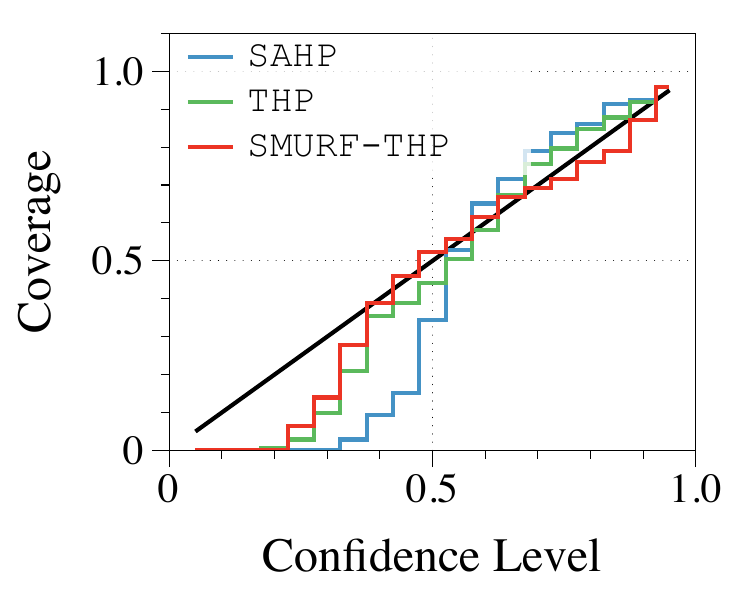}%
}%
\subfigure[Financial]{   
    \includegraphics[width=0.24\linewidth]{./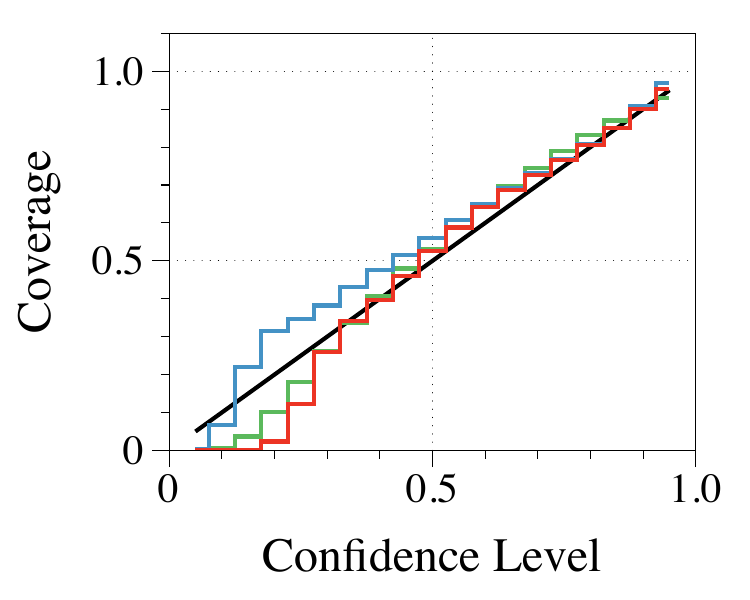}%
}%
\caption{Coverage of different confidence levels on four datasets.
}
% \vspace{-3ex}
\label{coverage}
\end{figure}
\begin{figure}[thb!]
\centering
\subfigure[StackOverflow]{   
    \includegraphics[width=0.24\linewidth]{./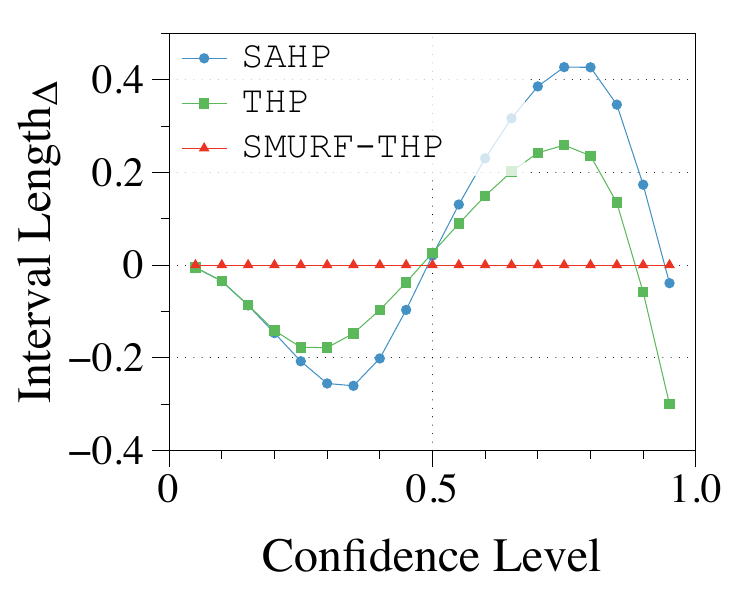}%
}%
\subfigure[Retweet]{   
    \includegraphics[width=0.24\linewidth]{./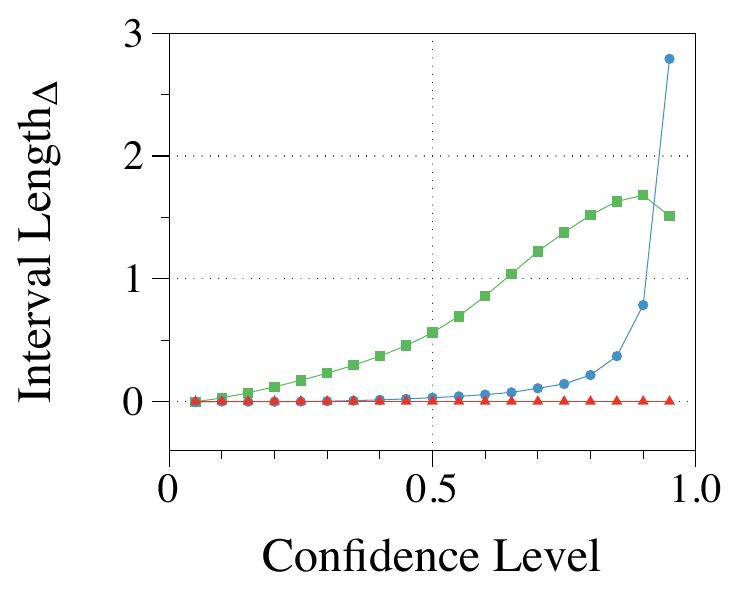}%
}%
\subfigure[MIMIC-II]{   
    \includegraphics[width=0.24\linewidth]{./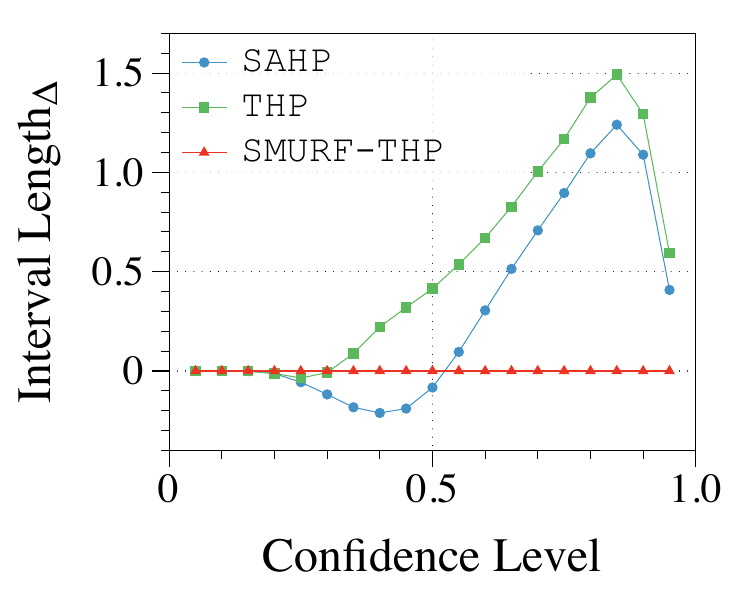}%
}%
\subfigure[Financial]{   
    \includegraphics[width=0.24\linewidth]{./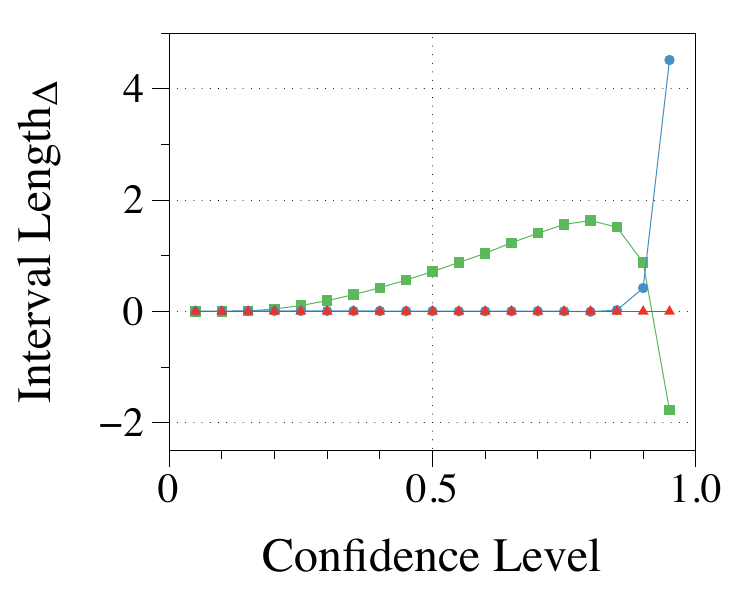}%
}%
% \vspace{5ex}
\caption{Interval Length of different confidence levels on four datasets. We minus the interval length of all methods with the length of $\smurf$ (denoted as Interval Length$_\Delta$) to better show the differences.
}
% \vspace{-10ex}
\label{intlen}
\end{figure}

\textbf{Distribution of Samples.}
We visualize the distribution of samples generated by $\smurf$ for several events to study the predicted intensity and present them in Figure~\ref{dis}.
A large proportion of the samples stay close to value 0, which is reasonable since most of the events occur within a short time after the previous one.
Distributions vary as time goes further. Figure~\ref{dis_3} and Figure~\ref{dis_4} exhibit that \smurf can still present tiny peaks around the ground truth value,
indicating that our model can still capture the effect of historical events.
Yet, the generated samples may be inaccurate due to the huge randomness around the large values of arrival time.

\begin{figure}[thb!]
\centering
\subfigure[Ground truth $t \in [0,1)$]{   
    \label{dis_1}
    \includegraphics[width=0.24\linewidth]{./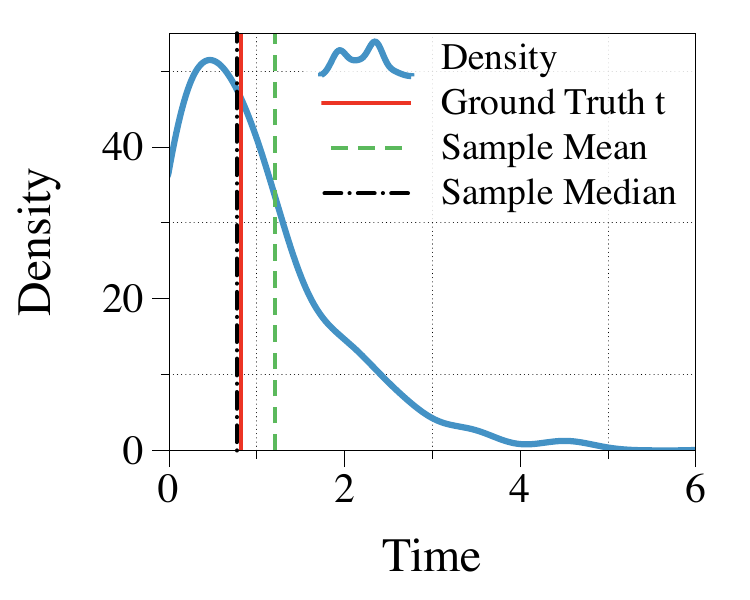}%
}%
\subfigure[Ground truth $t \in [1,2)$]{   
    \label{dis_2}
    \includegraphics[width=0.24\linewidth]{./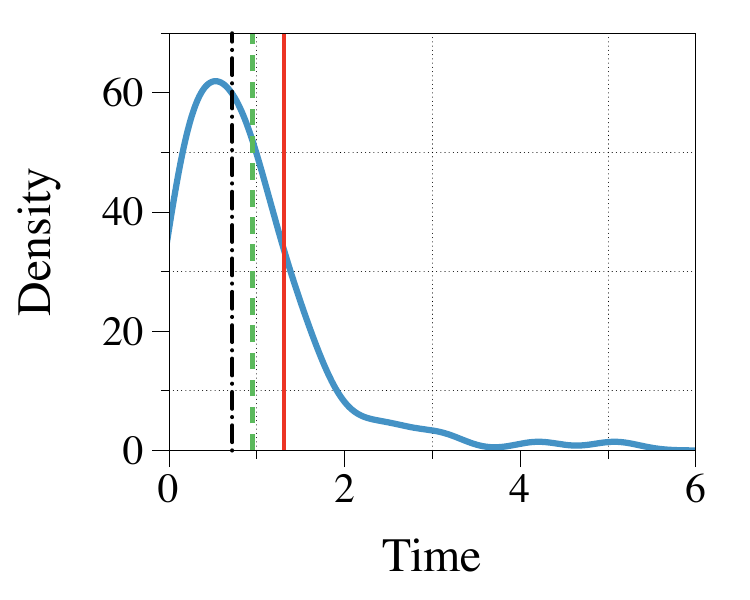}%
}%
\subfigure[Ground truth $t \in [4,5)$]{   
    \label{dis_3}
    \includegraphics[width=0.24\linewidth]{./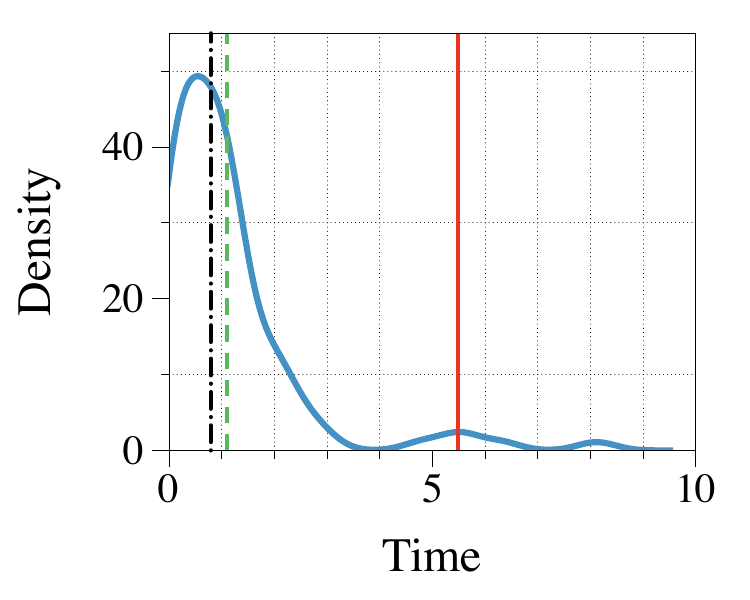}%
}%
\subfigure[Ground truth $t \in [5,6)$]{   
    \label{dis_4}
    \includegraphics[width=0.24\linewidth]{./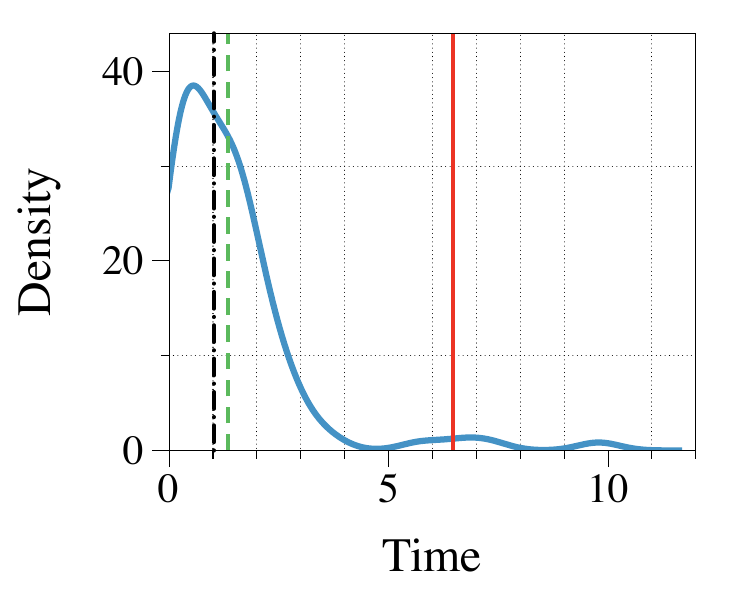}%
}%
\caption{Distribution of samples generated by $\smurf$ and the ground truth value $t$ in the StackOverflow dataset. The green and black lines signify the mean and median values of the samples. Ground truth of the event arrival time $t$ is chosen randomly from time intervals $[0,1), [1,2), [4,5)$, and $[5,6)$.}
\label{dis}
\end{figure}

\subsection{Ablation Study}
\textbf{Parametrization.}
In addition to parametrizing the intensity function, we can also parametrize the score function directly as was done in conventional score-based methods.
That is, we can mask the intensity function and directly parametrize the corresponding score function with the neural network.
Table~\ref{parame} summarizes the results on the StackOverflow and Retweet datasets, where $\smurf^\texttt{s}$ indicates $\smurf$ with the score function parametrized.
Results imply that parametrizing the intensity function fits the sequence better than parametrizing the score function.
This might occur because deriving the score function from the intensity implicitly narrows the searching space, which facilitates the model's convergence.
\begin{table}[thb!]
\centering
\caption{Calibration Score and CRPS of \smurf with two different parametrizations. The superscript \texttt{s} indicates a direct parametrization on the score function as opposed to no superscript indicating a parameterization on the intensity function.}
\vskip 0.15in
% \vspace{1ex}
\label{parame}
\begin{tabular}{lcccc}
\toprule
                                                   & \multicolumn{2}{c}{StackOverflow}   & \multicolumn{2}{c}{Retweet}         \\ \cmidrule(r){2-3} \cmidrule(r){4-5} 
Methods                                            & $\mathrm{CS}(\%)$ & $\mathrm{CRPS}$ & $\mathrm{CS}(\%)$ & $\mathrm{CRPS}$ \\
\midrule
% $\smurf_{2}^{\texttt{s}}$ & 8.09              & 0.54            & 17.53             & 1.06            \\
$\smurf^{\texttt{s}}$ & 6.13             & 0.54            & 3.92             & 1.33            \\
% $\smurf_{2}$                  & 0.98              & 0.45            & 9.83              & 0.98            \\
$\smurf$                  & \textbf{0.65}              & \textbf{0.44}            & \textbf{0.71}              & \textbf{0.86}            \\ 
\bottomrule
\end{tabular}%
\end{table}

\textbf{Denoising.}
\smurf adds Gaussian noise to data points and adopts denoising score matching for better performance and computational efficiency.
We test the effects of different noise scales $\sigma$ in Figure~\ref{noise}.
Compared with training on clean data, adding perturbations effectively improves the performance with a suitable noise scale.
Nevertheless, the selection of noise scale is tricky.
Adding too small noise will lead to unstable training, which does not cover the low-density regions enough and degrades the performance.
A too large noise, on the other hand, over-corrupts the data and alters the distribution significantly from the original one.
We simply select $\sigma$ by grid search in the above experiments and leave auto-selection for future work.

\textbf{Sampling Method.}
We inspect variants of our sampling algorithm, including Langevin Dynamics without the denoising step and Mirror Langevin Dynamics, on two datasets.
Table~\ref{sampling_abla} summarizes the results.
Langevin with the denoising step yields the best performance since the denoising step corrects the perturbed distribution and ensures non-negativity.
Mirror Langevin yields the worst results, likely due to the fact that \smurf learns the score of a perturbed distribution which is not restricted to a constrained region.

\begin{table}[thbp]
\centering
\caption{Calibration Score and CRPS of $\smurf$ with different sampling algorithms on StackOverflow and Retweet datasets.}
\vskip 0.15in
\label{sampling_abla}
\begin{tabular}{lcccc}
\toprule
                   & \multicolumn{2}{c}{StackOverflow}                                & \multicolumn{2}{c}{Retweet}                           \\ \cmidrule(r){2-3} \cmidrule(r){4-5} 
Sampling Algorithm & $\mathrm{CS}(\%)$ & $\mathrm{CRPS}$ & $\mathrm{CS}(\%)$ & $\mathrm{CRPS}$ \\
\midrule
Langevin+DS        & \textbf{0.65}                       & \textbf{0.44}   & \textbf{0.71}                       & \textbf{0.86}   \\
Langevin           & 0.87                                & 0.46            & 1.38                                & 0.92            \\
Mirror Langevin    & 8.88                               & 0.69            & 6.73                               & 1.29            \\ \bottomrule
\end{tabular}%
\end{table}

\textbf{Accuracy with ground truth timestamp.}
We measure the type prediction accuracy of \smurf given ground truth event time as the inputs of the classifier $g$.
As presented in Table~\ref{acc_gt}, the prediction accuracy of $\mathrm{\smurf}$ exhibits an obvious increase compared with results in Table~\ref{main_results}.
This indicates the value of arrival time for type prediction and also demonstrates that \smurf successfully captures the time-type dependencies.
\label{gt-acc-sec}
\begin{table}[thb!]
\centering
\caption{Event type prediction accuracy of $\smurf$ given ground truth event time as the inputs.}
\vskip 0.15in
\label{acc_gt}
% \resizebox{\textwidth}{!}{%
\begin{tabular}{cc}
\toprule
              & $\mathrm{\smurf}$ \\ \cmidrule{2-2} 
Dataset       & Acc(\%)                       \\ \midrule
StackOverflow & 48.38                         \\
Retweet       & 61.65                         \\
MIMIC-II         & 83.84                         \\
Financial     & 61.62                         \\ \bottomrule
\end{tabular}%
% }
\vspace{-2ex}
\end{table}

\textbf{Hyperparameter Sensitivity.}
We examine the sensitivity of the hyperparameter $\alpha$ in the training objectives to our model's prediction performance on the StackOverflow dataset. 
As shown in Figure~\ref{training_weight}, the Calibration Score and CRPS become lower as $\alpha$ grows, while the accuracy worsens.
This accords with intuition since a larger $\alpha$ indicates more attention on fitting event time and less on predicting event type.
In the above experiments, we choose the biggest $\alpha$ that can achieve comparable accuracy with THP.

We also train \smurf with different volumes of training data to study its generalization ability.
We train the model on different ratios of the dataset and present the performance in Figure~\ref{train_size_1} and Figure~\ref{train_size_2}.
As shown, all metrics go better as we feed more training data. 
Compared to the Retweet dataset, \smurf is more sensitive to training ratio on the StackOverflow dataset. This is due to that the StackOverflow dataset contains less events than the Retweet, thereby \smurf requires larger proportion to learn the distribution.

\begin{figure}[thbp]
\centering
\subfigure[Retweet]{   
    \label{noise_retweet}
    \includegraphics[width=0.30\linewidth]{./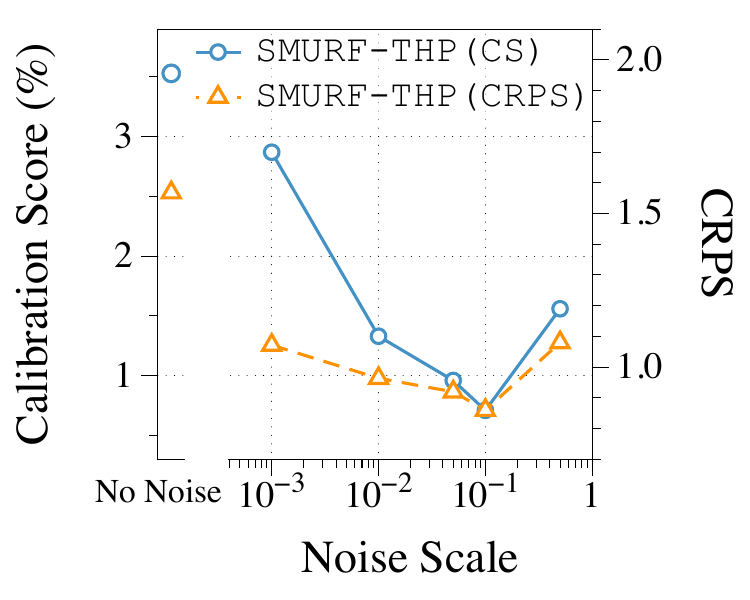}%
}%
\subfigure[StackOverflow]{   
    \label{noise_so}
    \includegraphics[width=0.30\linewidth]{./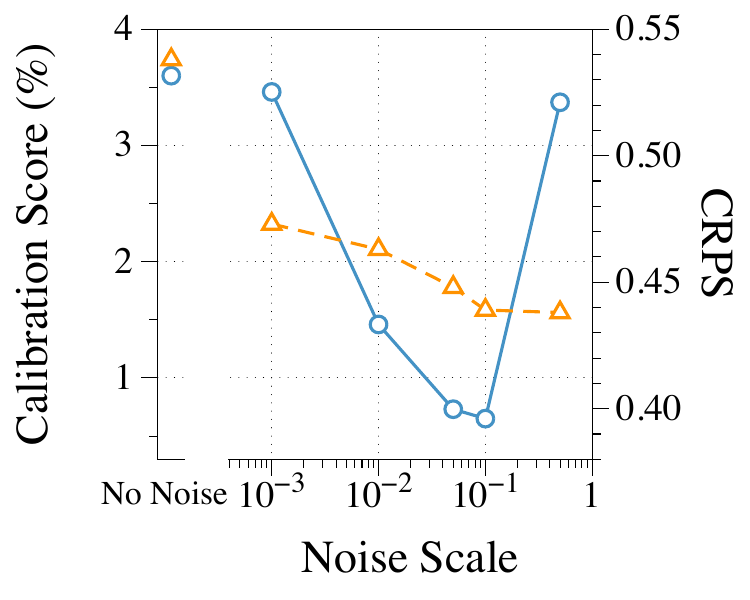}%
}%
\caption{Comparison of Calibration Score and CRPS of $\smurf$ with different noise scales and with no noise added.}
% \vspace{-3ex}
\label{noise}
\end{figure}

\begin{figure}[thb!]
\centering
\subfigure[Calibration Score and CRPS]{   
    \label{train_weight_cs}
    \includegraphics[width=0.30\linewidth]{./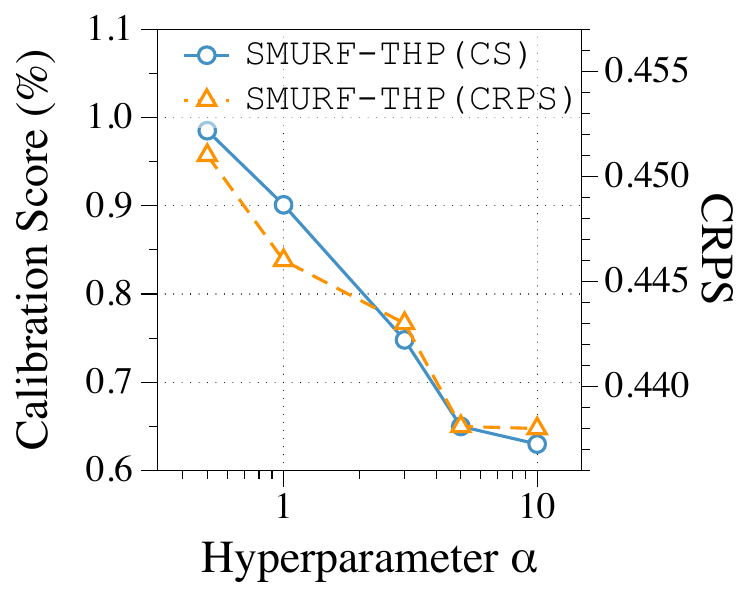}%
}%
\subfigure[Accuracy]{   
    \label{train_weight_acc}
    \includegraphics[width=0.30\linewidth]{./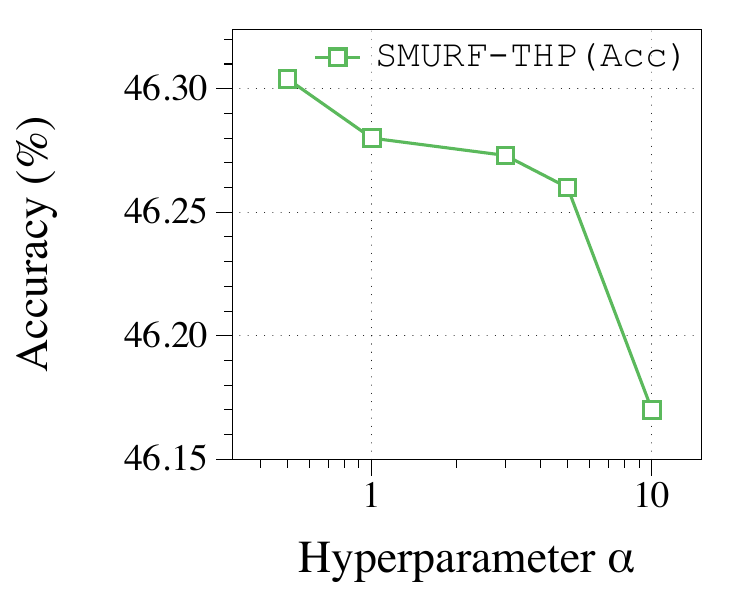}%
}%
\caption[]{Sensitivity of the hyperparameter $\alpha$ in the training objective to prediction performance.}
\label{training_weight}
\end{figure}

\begin{figure}[thb!]
\centering
\subfigure[StackOverflow]{   
    \includegraphics[width=0.30\linewidth]{./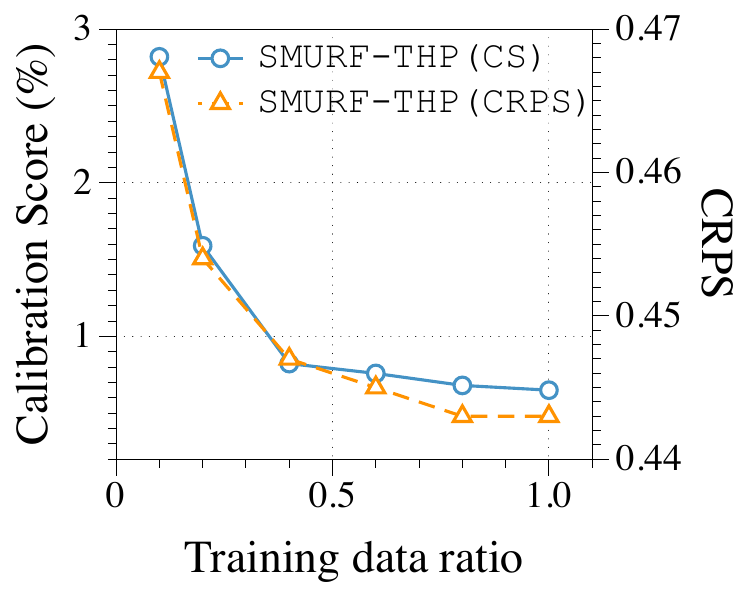}%
    }
\subfigure[Retweet]{  
    \includegraphics[width=0.30\linewidth]{./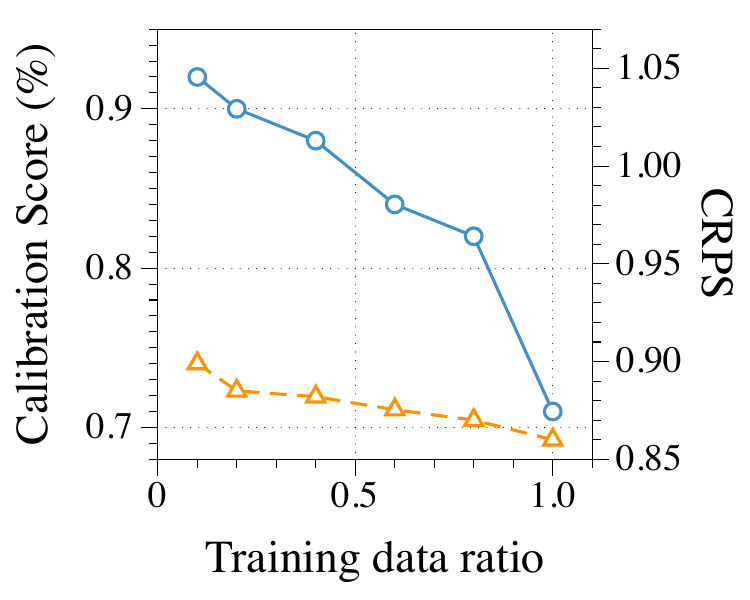}%
}%
\caption[]{Calibration Score and CRPS of $\smurf$ trained with different ratios of the StackOverflow and Retweet datasets.}
\label{train_size_1}
\vspace{-2.5ex}
\end{figure}

\begin{figure}[thb!]
\centering
\includegraphics[height=0.30\linewidth]{./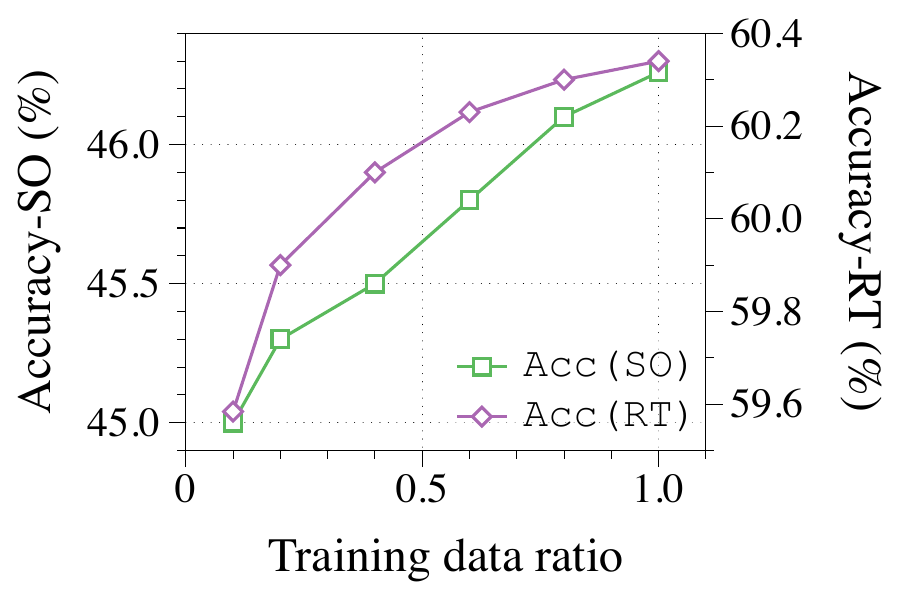}%
\caption{Prediction Accuracy of $\smurf$ trained with different ratio of the StackOverflow and Retweet datasets.}
\label{train_size_2}
\vspace{-2.5ex}
\end{figure}

\section{Discussion and Conclusion}
We acknowledge the existence of several studies that adopt non-likelihood-based estimators to circumvent the intractable integral within the log-likelihood computation. We present our discussions on these works below.
\citet{DIS2017} trains the model to directly predict the next event's time and type through a summation of Mean Squared Error (MSE) and cross-entropy loss.
However, their model does not construct an explicit intensity function and hence doesn’t support flexible sampling and uncertainty quantification.
TPPRL \citep{TPPRL2018} employs reinforcement learning (RL) for learning an events generation policy, but they concentrate on the temporal point process (TPP) rather than the marked temporal point process (MTPP), which is the focus of our work. 
Furthermore, they presume the intensity function to be constant between timestamps, a limitation that hampers the accurate capture of the point process's temporal dynamics.
\citet{RLPP2018} applies RL for MTPP which trains a policy to maximize feedback from the environment.
Similar to TPPRL, it assumes a very stringent intensity function, e.g., in exponential forms, which is oversimplified to capture the complex point process in real-world applications.
INITIATOR \citep{initiator2018} and NCE-TPP \citep{ncetpp2020} are both based on noise-contrastive estimations for MTPP. However, they utilize the likelihood objective for training the noise generation network, which consequently reintroduces the intractable integral. In our experiments, we include NCE-TPP's performance, as its authors have demonstrated it outperforms INITIATOR.

Several other works \citep{uncertain-infer, uncertain-spatial} explore different scopes of uncertainty quantification for the Hawkes process. That is, they provide uncertainty quantification for the parameters in conventional Hawkes process models, whereas we focus on uncertainty quantification for the predicted arrival time.

In this work, we present \smurf, a score-based method for training Transformer Hawkes process models and quantifying prediction uncertainty.
Our proposed model adopts score matching as the training objective to avoid intractable computations in conventional Hawkes process models.
Moreover, with the learnt score function, we can sample arrival time of events using the Langevin Dynamics. This enables uncertainty quantification by calculating the associated confidence interval.
Experiments on various real-world datasets demonstrate that \smurf achieves state-of-the-art performance in terms of Calibration Score, CRPS and Interval Length.

\clearpage
\bibliography{main}
\bibliographystyle{apalike}

%%%%%%%%%%%%%%%%%%%%%%%%%%%%%%%%%%%%%%%%%%%%%%%%%%%%%%%%%%%%%%%%%%%%%%
% APPENDIX
%%%%%%%%%%%%%%%%%%%%%%%%%%%%%%%%%%%%%%%%%%%%%%%%%%%%%%%%%%%%%%%%%%%%%%
\newpage
\appendix
\onecolumn

\section{Training Detail}
\label{app:train}
\begin{table}[h]
\centering
\caption{Summary of hyperparameters used in training.}
\label{hyper}
\resizebox{\textwidth}{!}{%
\begin{tabular}{cccccccccccc}
\hline
Dataset       & \#head & \#layer & $d_\mathrm{model}$ & $d_k=d_v$ & $d_{\mathrm{hidden}}$ & dropout & batch & lr   & noise scale & Langevin step size $\epsilon$ & \#step \\ \hline
StackOverflow & 4     & 4      & 64          & 16        & 256          & 0.1     & 4     & 1e-4 & 1e-1        & 5e-2                          & 1000  \\
Retweet       & 3     & 3      & 64          & 16        & 256          & 0.1     & 16    & 5e-3 & 5e-2        & 1e-3                          & 5000  \\
MIMIC-II      & 3     & 3      & 64          & 16        & 256          & 0.1     & 1     & 1e-4 & 1e-1        & 2e-2                          & 1000  \\
Financial     & 6     & 6      & 128         & 64        & 2048         & 0.1     & 1     & 1e-4 & 5e-2        & 5e-3                          & 3000  \\ \hline
\end{tabular}%
}
\end{table}

For all datasets used in the experiments, we employ the same data pre-processing and train-dev-test split as \citet{rmtpp2016} and \citet{nhp2017}.
Please found more details and downloadable links in the aforementioned papers.
In addition, we apply normalization to the Retweet, MIMIC-II and MemeTrack datasets to keep the time scale consistent.
The hyper-parameters we use are summarized in Table~\ref{hyper}.
We adopt the Adam optimizer and train all models with 50 epochs using an NVIDIA Tesla P100 GPU.
The number of perturbation and samples are set to be 100, increasing which may improve model's performance.

\end{document}